
\documentclass[10pt,conference,a4paper]{IEEEtran}
\usepackage{algorithm} 
\usepackage{algorithmic}  
\usepackage[algo2e,linesnumbered]{algorithm2e} 
\usepackage{amsfonts}
\usepackage{amsbsy}
\usepackage{graphicx}
\usepackage{physics}
\usepackage{subfigure}
\usepackage{xcolor}
\usepackage{amsmath}
\usepackage{amssymb}
\usepackage{amsthm}
\usepackage{url}

\newtheorem{theorem}{Theorem}
\newtheorem{proposition}{Proposition}
\newtheorem{definition}{Definition}
\hyphenation{op-tical net-works semi-conduc-tor}


\newcommand{\X}{\mathcal{X}}
\newcommand{\Y}{\mathcal{Y}}
\newcommand{\F}{\mathcal{F}}
\newcommand{\Set}{\mathcal{S}}
\newcommand{\Loss}{\mathcal{L}}
\newcommand{\R}{\mathcal{R}}
\newcommand{\K}{\mathfrak{K}}
\newcommand{\x}{\mathrm{x}}
\newcommand*\Bell{\ensuremath{\boldsymbol\ell}}

\setlength\textfloatsep{5pt}
\begin{document}
%
\title{MMGAN: Manifold-Matching Generative Adversarial Network}


\author{\IEEEauthorblockN{Noseong Park}
\IEEEauthorblockA{University of North Carolina\\ Charlotte, USA \\ npark2@uncc.edu}
\and
\IEEEauthorblockN{Ankesh Anand}
\IEEEauthorblockA{Montreal Institute for Learning Algorithms\\ Montreal, Canada \\ankesh.anand@umontreal.ca}
\and
\IEEEauthorblockN{Joel Ruben Antony Moniz}
\IEEEauthorblockA{Carnegie Mellon University\\ Pittsburgh, USA \\ jrmoniz@andrew.cmu.edu}
\and
\IEEEauthorblockN{Kookjin Lee}
\IEEEauthorblockA{University of Maryland\\ College Park, USA \\ klee@cs.umd.edu}
\and
\IEEEauthorblockN{Jaegul Choo}
\IEEEauthorblockA{Korea University\\ Seoul, Korea \\  jchoo@korea.ac.kr }
\and
\IEEEauthorblockN{Tanmoy Chakraborty}
\IEEEauthorblockA{IIIT Delhi, India \\ tanmoy@iiitd.ac.in}
\and
\IEEEauthorblockN{Hongkyu Park and Youngmin Kim}
\IEEEauthorblockA{Electronics and Telecommunications Research Institute\\ Daejeon, Korea \\ \{hkpark,injesus\}@etri.re.kr}
}


%


\maketitle

\begin{abstract}
It is well-known that GANs are difficult to train, and several different techniques have been proposed in order to stabilize their training. In this paper, we propose a novel training method called \textit{manifold-matching}, and a new GAN model called \textit{manifold-matching GAN} (MMGAN). MMGAN finds two manifolds representing the vector representations of real and fake images. If these two manifolds match, it means that real and fake images are statistically identical. To assist the manifold-matching task, we also use i) kernel tricks to find better manifold structures, ii) moving-averaged manifolds across mini-batches, and iii) a regularizer based on correlation matrix to suppress mode collapse.

We conduct in-depth experiments with three image datasets and compare with several state-of-the-art GAN models. 32.4\% of images generated by the proposed MMGAN are recognized as fake images during our user study (16\% enhancement compared to other state-of-the-art model). MMGAN achieved an unsupervised inception score of 7.8 for CIFAR-10.  
\end{abstract}


%
\IEEEpeerreviewmaketitle

\section{Introduction}
\textit{Generative adversarial networks} (GANs)~\cite{goodfellow2014generative} have been recently proposed to generate synthetic images.
However, it had been shown that it is notoriously difficult to train GANs due to several reasons~\cite{DBLP:journals/corr/ArjovskyB17}. One of the main difficulties arise by ill-designed loss functions and zero gradients.

Researchers have proposed several variations to solve such a difficult training process~\cite{hjelm2017boundary,arjovsky2017wasserstein,DBLP:journals/corr/SalimansGZCRC16}. In this paper, we propose a new loss function based on \emph{manifold-matching}. The manifold hypothesis puts forth the proposition that natural data will form manifolds in the embedding space~\cite{NIPS2010_3958}. From this perspective, it is likely that there exist two distinct manifolds: one formed by real data samples and another formed by generated samples. Our proposed method trains the generator $G$ with a specially designed loss function to match the two manifolds. Particularly in this paper, we assume a spherical manifold for various computational conveniences that it affords, such as kernel tricks. Furthermore, a spherical manifold can be described by a center point and its radius, making it easy to check if two spherical manifolds match. We can easily apply kernel tricks to spherical manifolds~\cite{hofmann2008}, by representing a space mapping via a kernel, thereby being able to apply the mapping without explicitly performing the mapping operation. Thus, our definition of manifold-matching accompanies space mappings (i.e., kernel tricks) to find more reliable manifold representations of images. This approach is better than using manifolds in the original feature space. The proposed manifold-matching GAN (MMGAN) is named after this notion of manifold-matching.

Training with mini-batches is the standard way to train neural networks. We use the \textit{exponentially weighted moving average} to find center points and radii of two manifolds across multiple mini-batches and match them. This is more stable than the local manifold-matching in a mini-batch.

We also propose a regularization term to enhance diversity in generated samples, which allows MMGAN to avoid mode collapsing, a well-known problem in GANs. That is, the manifold of generated samples will collapse to a point if all the samples are too similar.

We implement the proposed MMGAN on top of the deep convolutional GAN (DCGAN)~\cite{radford2015unsupervised} and improved GAN (IGAN) ~\cite{DBLP:journals/corr/SalimansGZCRC16}. DCGAN and IGAN have similar neural network architectures, but IGAN uses a more advanced training method than DCGAN. We replace the loss functions of the two state-of-the-art GAN models with our proposed manifold-matching loss.

To evaluate the performance, we use three popular image datasets: MNIST~\cite{lecun-mnisthandwrittendigit-2010}, CelebA~\cite{liu2015faceattributes}, and CIFAR-10~\cite{cifar}. Our experiments shows that MMGAN performs as good as popular GAN models or sometimes even better than them.

\section{Related Work and Preliminaries}
\subsection{Generative Adversarial Networks (GANs)}
The following minimax game equation describes the core idea of GANs~\cite{goodfellow2014generative}. Two players, a discriminator $D$ and a generator $G$, in the zero-sum minimax game are alternately trained by the following objective:
\begin{equation}\label{eq:gan}\begin{aligned}
\min_{G} \max_{D} V(G,D) =  & \mathbb{E}[\log D(x)]_{x \sim p_{data}(x)} \\
  & + \mathbb{E}[\log (1-D(G(z)))]_{z \sim p(z)},
\end{aligned}\end{equation}
where $p(z)$ is a prior distribution, $G(z)$ is a generator function, and $D(\cdot)$ is a discriminator function whose output spans $[0,1]$. $D(x)=0$ (resp. $D(x)=1$) indicates that the discriminator $D$ classifies a sample $x$ as \textit{generated} (resp. \textit{real}).



Previously, it was shown that it is possible that a manifold $M_R$ representing real data samples transversely intersects with another manifold $M_G$ representing samples generated by the generator $G$ during the training process~\cite{DBLP:journals/corr/ArjovskyB17}. This describes the main intuition behind why the discriminator $D$ becomes really strong (because it is relatively easy to build such a classifier that clearly distinguishes two transversely intersecting manifolds in an ambient space), which deteriorates the entire learning process of GANs.

\subsection{Other GAN Models}

In Wasserstein GANs (WGANs)~\cite{arjovsky2017wasserstein}, the distance between the target distribution $p_{data}(x)$ and the generated sample distribution $p_{g}(x)$ is used in the loss function. When the loss function is minimized, the two distributions become identical (e.g., $p_{data}(x) = p_g(x)$). However, WGANs are known to be unstable if the gradients of the loss function are large. Therefore, they clip weights if they are too large after each stochastic gradient descent update.

In~\cite{DBLP:journals/corr/BellemareDDMLHM17,DBLP:journals/corr/MrouehSG17}, new distance metrics were proposed. The popular Wasserstein metric is a special case of the integral probability metric (IPM) of~\cite{DBLP:journals/corr/MrouehSG17} and the energy distance of ~\cite{DBLP:journals/corr/BellemareDDMLHM17} shows \textit{unbiased sum gradients}. Both metrics show no less performance than the Wasserstein distance.

IGANs~\cite{DBLP:journals/corr/SalimansGZCRC16} proposed several heuristics to better train GANs: one is \emph{feature matching}, and the other is \emph{mini-batch discrimination}. In particular, IGANs is considered the best generative model for various image datasets~\cite{DBLP:journals/corr/Goodfellow17}, and it has achieved a high inception score (see Section~\ref{sec:env} for its description) for CIFAR-10~\cite{DBLP:journals/corr/SalimansGZCRC16}. CVAE-GAN~\cite{DBLP:journals/corr/BaoCWLH17}, EBAGN~\cite{2016arXiv160903126Z}, and some others~\cite{2016arXiv160603439K} extended the concept of feature matching of IGAN to pairwise feature matching and some other different feature matching concepts. These approaches are similar to our method. However, our MMGAN considers not only features but also many other factors such as manifold, kernel trick, moving average, and correlation regularization.

On the other hand, auto-encoders have been integrated with GANs. In adversarial generator-encoder networks~\cite{DBLP:journals/corr/UlyanovVL17a}, the authors proposed an adversarial architecture between an encoder and a generator, in order to match the data distributions of real and fake images. They showed that the encoder-generator game is advantageous in matching distributions. $\alpha$-GAN is one of the most recent auto-encoding GANs. It uses variational inference for training, where the intractable likelihood function is replaced by a synthetic likelihood and the unknown posterior distribution is replaced by an implicit function. Afterwards, the variational auto-encoder and the generative adversarial network can be successfully merged.

There exist many other GAN architectures. However, we primarily compare our proposed model with IGAN, DCGAN, and auto-encoding GANs, considering their popularity and influence on numerous GAN variants. For image generation tasks, most of the GAN models use the discriminator and generator neural network architectures proposed by DCGAN, and among them, IGAN is considered as the state-of-the-art for several image datasets~\cite{DBLP:journals/corr/Goodfellow17}.

\section{Manifold-Matching GAN}
In this section, we first describe the main idea of \emph{manifold-matching}, and then the proposed manifold-matching GAN (MMGAN).

\subsection{Manifold-Matching}\label{sec:mm}
\begin{figure}[t]
\centering
\includegraphics[width=0.2\textwidth]{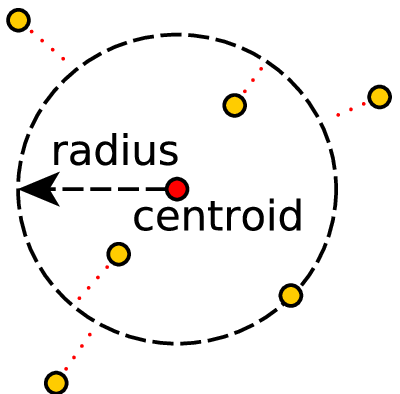}
\caption{Yellow points represent vector representations of data samples in a feature space. A manifold is represented by a sphere in the same feature space when $\varphi(s) = s$. A red point represents the centroid of the yellow points. The best manifold representing yellow points should minimize the sum of errors highlighted in red dotted lines.}\label{fig:manifold}
\end{figure}

Let $\Y = \{\mathbf{y_1}, \mathbf{y_2}, \cdots, \mathbf{y_n}\}$ (resp. $\X = \{\mathbf{x_1}, \mathbf{x_2}, \cdots, \mathbf{x_n}\}$) be the set of $d$-dimensional vector representations of real samples (resp. generated samples) --- we use bold font to denote vectors, and thus, $\mathbf{x}$ is a vector representation of a sample $x$. Note that we can extract these vector representations from the last layer of the discriminator.

Given a set $\Set$ of $n$ points or vectors (e.g., $\Set = \{\mathbf{s_1}, \mathbf{s_2}, \cdots, \mathbf{s_n}\}$), let $M_S$ be a manifold representing these points. For instance, $n$ points can be described by a sphere of centroid $\mathbf{c}$ and radius $\ell$ as follows:
\begin{equation}
\| \varphi(\mathbf{c}) - \varphi(\mathbf{s_i}) \|^2_2 = \ell^2,
\end{equation} where $\mathbf{s_i} \in \Set$ and $\varphi(\cdot)$ is a space mapping function. 

\textbf{Note that our manifold definition is not a simple sphere in the original feature space but a distorted sphere after the space mapping}. The key point is which space mapping we will use. It is not easy and might take a lot of time to find the best manifold that describes the set $\Set$ without any errors through a mapping $\varphi$. To this end, existing methods such as manifold learning~\cite{Roweis00nonlineardimensionality} can be adopted. For instance, a projection matrix that projects vector representations onto a low-dimensional sphere and preserves distances in the original space within local neighborhoods can be learned. However, we do not adopt this approach because it will increase the training time. Instead, we use several popular kernels equivalent to space mappings. This kernel trick enables us to minimize computational overheads.

For the sake of simplicity, in this section, let us assume the simplest mapping function $\varphi(x) = x$. We describe other mappings in Section~\ref{sec:kernel}.

With $\varphi(s) = s$, the best manifold is achieved when its center is the centroid $\mathbf{c}$ of $n$ points, and the radius is the mean distance from the centroid to $n$ points. This manifold minimizes the error, as shown in Figure~\ref{fig:manifold}, where the error is defined as the minimum distance from the points to the manifold (i.e., in this case, to the surface of a sphere).

\begin{proposition}
Given $n$ points in $d$-dimensional space, let us define its centroid $\mathbf{c}$ as 
\begin{equation}
\mathbf{c} = \frac{\sum_i{s_i}}{n}.
\end{equation}
Then, the centroid $\mathbf{c}$ minimizes the sum of squared distances to all the points, which is a well-known property.

\end{proposition}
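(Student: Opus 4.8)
The plan is to show that the function $f(\mathbf{p}) = \sum_{i=1}^n \|\mathbf{p} - \mathbf{s_i}\|_2^2$ is minimized uniquely at $\mathbf{p} = \mathbf{c}$, where $\mathbf{c} = \frac{1}{n}\sum_i \mathbf{s_i}$. Since $f$ is a sum of convex quadratics in $\mathbf{p}$, it is itself a smooth convex function, so it suffices to find the stationary point where $\nabla f(\mathbf{p}) = \mathbf{0}$. Computing the gradient coordinate-wise gives $\nabla f(\mathbf{p}) = \sum_{i=1}^n 2(\mathbf{p} - \mathbf{s_i}) = 2n\mathbf{p} - 2\sum_i \mathbf{s_i}$, which vanishes precisely when $\mathbf{p} = \frac{1}{n}\sum_i \mathbf{s_i} = \mathbf{c}$. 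Because the Hessian is $2n I \succ 0$, this critical point is the unique global minimizer.

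Alternatively — and this is the version I would probably write out, since it avoids calculus and makes the ``minimizes the sum of squared distances'' claim transparent — I would use the bias–variance-style decomposition. For any point $\mathbf{p}$, write $\mathbf{p} - \mathbf{s_i} = (\mathbf{p} - \mathbf{c}) + (\mathbf{c} - \mathbf{s_i})$ and expand the squared norm:
\begin{equation}
\sum_{i=1}^n \|\mathbf{p} - \mathbf{s_i}\|_2^2 = \sum_{i=1}^n \|\mathbf{p} - \mathbf{c}\|_2^2 + 2(\mathbf{p} - \mathbf{c})^\top \sum_{i=1}^n (\mathbf{c} - \mathbf{s_i}) + \sum_{i=1}^n \|\mathbf{c} - \mathbf{s_i}\|_2^2.
\end{equation}
The key step is that the cross term vanishes: $\sum_{i=1}^n (\mathbf{c} - \mathbf{s_i}) = n\mathbf{c} - \sum_i \mathbf{s_i} = \mathbf{0}$ by the definition of the centroid. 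Hence $\sum_i \|\mathbf{p} - \mathbf{s_i}\|_2^2 = n\|\mathbf{p} - \mathbf{c}\|_2^2 + \sum_i \|\mathbf{c} - \mathbf{s_i}\|_2^2$, and since the first term is nonnegative and zero only at $\mathbf{p} = \mathbf{c}$, the centroid is the unique minimizer.

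There is essentially no serious obstacle here — the result is elementary and classical (it is the statement that the sample mean minimizes total squared deviation). The only thing to be a little careful about is making sure the cross term really does cancel, which is exactly where the specific form $\mathbf{c} = \frac{1}{n}\sum_i \mathbf{s_i}$ is used; any other candidate center would leave a nonzero linear term and hence could be improved. I would keep the write-up to the three lines of the decomposition argument above and note uniqueness follows from strict positivity of $n\|\mathbf{p}-\mathbf{c}\|_2^2$ for $\mathbf{p} \neq \mathbf{c}$.
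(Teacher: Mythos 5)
Your proof is correct, and in fact the paper offers no proof at all for this proposition --- it simply invokes the result as a ``well-known property.'' Both of your arguments (the convexity/gradient computation and the decomposition $\sum_i \|\mathbf{p} - \mathbf{s_i}\|_2^2 = n\|\mathbf{p} - \mathbf{c}\|_2^2 + \sum_i \|\mathbf{c} - \mathbf{s_i}\|_2^2$ with the vanishing cross term) are standard and complete, and either one, particularly the three-line decomposition with its uniqueness remark, supplies exactly the justification the paper leaves implicit.
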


\begin{proposition}
Given $n$ points and their centroid $\mathbf{c}$, let us define the radius $\ell$ as
\begin{equation}
\ell = \frac{ \|\mathbf{c} - \mathbf{s_i} \|_2 }{n}.
\end{equation}
Then, the radius $\ell$ minimizes the sum of squared errors from the manifold to all the points.
\end{proposition}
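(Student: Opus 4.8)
The plan is to treat the radius $\ell$ as the only free variable: by the preceding proposition the centroid $\mathbf{c}$ is already fixed, so each distance $r_i := \|\mathbf{c} - \mathbf{s_i}\|_2$ is a constant. First I would make the notion of ``error'' precise. For a sphere of center $\mathbf{c}$ and radius $\ell$, the error contributed by a point $\mathbf{s_i}$ is its distance to the \emph{surface} of the sphere, namely $e_i(\ell) = \big|\, r_i - \ell \,\big|$, because the closest point of the sphere to $\mathbf{s_i}$ lies on the ray from $\mathbf{c}$ through $\mathbf{s_i}$. Hence the quantity to be minimized is
\begin{equation}
E(\ell) = \sum_{i=1}^{n} e_i(\ell)^2 = \sum_{i=1}^{n} \big( r_i - \ell \big)^2 .
\end{equation}

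Next I would observe that $E(\ell)$ is a one-dimensional quadratic in $\ell$ with positive leading coefficient, since $E(\ell) = n\ell^2 - 2\ell \sum_i r_i + \sum_i r_i^2$; it is therefore strictly convex, and its unique global minimizer is the stationary point. Differentiating,
\begin{equation}
E'(\ell) = -2\sum_{i=1}^{n}\big(r_i - \ell\big) = 2\Big(n\ell - \sum_{i=1}^{n} r_i\Big),
\end{equation}
which vanishes exactly at $\ell^\star = \frac{1}{n}\sum_{i=1}^{n} r_i = \frac{1}{n}\sum_{i=1}^{n}\|\mathbf{c} - \mathbf{s_i}\|_2$, i.e.\ the mean distance from the centroid to the $n$ points. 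Since $E''(\ell) = 2n > 0$ this stationary point is the minimum, and because $\ell^\star \ge 0$ it is a legitimate radius, so no nonnegativity constraint is active.

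I do not anticipate a genuine obstacle here; the only points requiring care are (i) justifying $e_i(\ell) = |r_i - \ell|$ for the point-to-sphere distance, a short geometric argument that continues to hold in the degenerate case $\mathbf{s_i} = \mathbf{c}$ with $r_i = 0$, and (ii) flagging the apparent typo in the statement — the sum over $i$ is missing in the displayed definition of $\ell$, and the minimized objective is the sum of \emph{squared} errors rather than the sum of errors. Once ``error'' is read as distance-to-surface and the average is taken over all $n$ points, the claim reduces to the elementary fact that the sample mean minimizes the sum of squared deviations, applied to the scalars $r_1,\dots,r_n$.
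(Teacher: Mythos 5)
Your proof is correct: the paper states this proposition without any accompanying proof, and your argument --- reading the error as the point-to-surface distance $|r_i-\ell|$ with $r_i=\|\mathbf{c}-\mathbf{s_i}\|_2$ (which matches the paper's own definition of error in Figure 1 and the surrounding text) and minimizing the strictly convex quadratic $\sum_{i}(r_i-\ell)^2$ at its stationary point $\ell^\star=\frac{1}{n}\sum_{i}r_i$ --- supplies exactly the standard one-variable least-squares justification the paper leaves implicit, including the degenerate case $\mathbf{s_i}=\mathbf{c}$. You are also right to flag the typo: the displayed definition of $\ell$ omits the summation over $i$, and the intended quantity is the mean distance $\ell=\frac{1}{n}\sum_{i}\|\mathbf{c}-\mathbf{s_i}\|_2$, consistent with the text preceding the proposition.
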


manifold-matching involves checking how similar $M_\Y$ and $M_\X$ are to each other. If these two manifolds are identical, we can effectively say that $p_{data} = p_{g}$ from the perspective of the discriminator (because the vector representations are obtained from the last layer of the discriminator). We use the following criteria to check if the two manifolds match.

\begin{definition}[manifold-matching Condition]\label{def:mm}
Two sphere manifolds $M_\Y$ and $M_\X$ are identical if their centroids and radii are the same. For sphere manifolds, we check if $\| \mathbf{c_{M_\Y}} - \mathbf{c_{M_\X}} \|_2 = 0$ and $|\ell_{M_\Y} - \ell_{M_\X} |= 0$.
\end{definition}


\subsection{Manifold-Matching GAN}
\begin{figure}[t]
\centering
\includegraphics[width=0.3\textwidth]{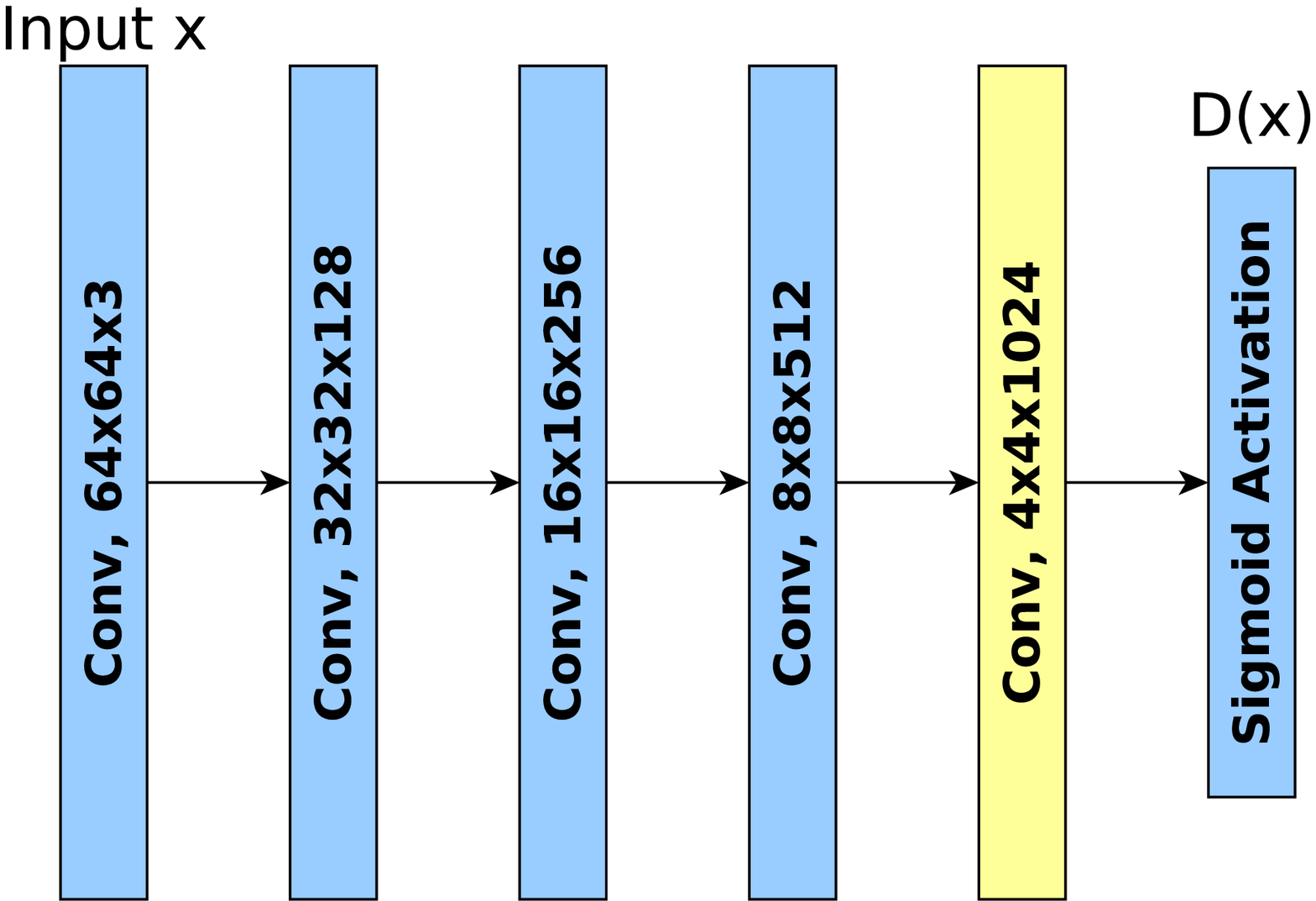}
\caption{Discriminator $D$. $D(\x)$ is computed after the last sigmoid activation. Vector representations of images are obtained by vectorizing the feature map marked in yellow.}\label{fig:dcgan}
%
%
%
\end{figure}


We extract vector representations from the last layer of the discriminator $D$ as shown in Figure~\ref{fig:dcgan}, and the generator $G$ is trained using the following loss $\Loss_G$ based on the manifold-matching concept: 
\begin{small}\begin{equation}\begin{aligned}\label{eq:loss}
\Loss_G &= \| \mathbf{c_{M_\Y}} - \mathbf{c_{M_\X}} \|_2 + | \mathbf{\ell_{M_\Y}} - \mathbf{\ell_{M_\X}} |,
\end{aligned}\end{equation}\end{small}where $M_\Y$ is a sphere manifold representing real samples, and $M_\X$ is a sphere manifold of generated samples. 

In this loss $\Loss_G$, any space mapping is not used. In the next section, we will describe the loss $\Loss_G^{\K}$ when a space mapping is used. The above loss function requires that the two centroids be the same and that the difference between their radii be zero, which corresponds to the manifold-matching condition in Definition~\ref{def:mm}.

The discriminator $D$ is trained using the following loss:
\begin{small}\begin{equation}\begin{aligned}\label{eq:loss}
\Loss_D &= \Loss_{orig} - \Loss_G,
\end{aligned}\end{equation}\end{small}where $\Loss_{orig}$ is the original GAN loss shown in Equation~\ref{eq:gan} and $- \Loss_G$ is the adversarial version of the generator loss. In other words, the proposed MMGAN performs an adversarial game for manifold-matching.

The training of $D$ is also crucial since we extract vector representations from it. As the discriminator $D$ gets improved, the vector representations also become more accurate. As a result, the training process of MMGAN becomes more reliable. For this reason, we prefer a strong discriminator that is able to produce accurate vector representations to distinguish real and generated images. Recall that the generator is not well trained if the discriminator is too strong in the original GAN~\cite{DBLP:journals/corr/ArjovskyB17}. However, our MMGAN is free from the critical problem. This is one of our main contributions in this paper.




\section{Reproducing Kernel Hilbert Space\footnotesize{ (RKHS)}}\label{sec:kernel}

\begin{figure*}[ht]
\begin{small}\begin{center}\begin{equation}\begin{aligned}
\| \varphi(\mathbf{c}) - \varphi(\mathbf{s_i}) \|^2_2 &= \K(\mathbf{c},\mathbf{c}) -2\K(\mathbf{c},\mathbf{s_i}) + \K(\mathbf{s_i},\mathbf{s_i}) = \ell_{\K}, \\
\label{eq:kr}\ell_{\K} = \frac{\sum_{s_i \in \Set}\| \varphi(\mathbf{c}) - \varphi(\mathbf{s_i}) \|_2^2}{n} &= \frac{\sum_{s_i \in \Set}\K(\mathbf{c},\mathbf{c}) -2\K(\mathbf{c},\mathbf{s_i}) + \K(\mathbf{s_i},\mathbf{s_i})}{n},
\end{aligned}\end{equation}\end{center}\end{small}where $\varphi$ is a mapping from the original space to the Hilbert space and $\K$ is a kernel induced by the mapping $\varphi$. Thus, $\ell_{\K}$ stands for the radius after the space mapping.

\begin{footnotesize}
\begin{equation}\begin{aligned}\label{eq:kernelloss}
\Loss_G^{\K} &= \| \varphi(\mathbf{c_\Y}) - \varphi(\mathbf{c_\X}) \|^2_2 + \alpha\cdot\abs{\ell_{\K,M_\Y} - \ell_{\K,M_\X}} \\
&= \K(\mathbf{c_\Y},\mathbf{c_\Y}) -2\K(\mathbf{c_\Y},\mathbf{c_\X}) + \K(\mathbf{c_\X},\mathbf{c_\X}) \\&+ \alpha\cdot\abs{\frac{\sum_{y_i \in \Y}\K(\mathbf{c_\Y},\mathbf{c_\Y}) -2\K(\mathbf{c_\Y},\mathbf{y_i}) + \K(\mathbf{y_i},\mathbf{y_i})}{n} - \frac{\sum_{x_i \in \X}\K(\mathbf{c_\X},\mathbf{c_\X}) -2\K(\mathbf{c_\X},\mathbf{x_i}) + \K(\mathbf{x_i},\mathbf{x_i})}{n}},
\end{aligned}\end{equation}\end{footnotesize} where $\ell_{\K,M_\X}$ and $\ell_{\K,M_\Y}$ can be calculated with Equation~\eqref{eq:kr}.\end{figure*}

Reproducing Kernel Hilbert Space (RKHS) is known to be a flexible approach to represent manifolds. In RKHS, a kernel $\K$ is used to lay the proposed sphere manifold representing a set $\Set = \{\mathbf{s_1}, \mathbf{s_2}, \cdots, \mathbf{s_n}\}$ in a Hilbert space as Equation~\eqref{eq:kr}.

There exist many different kernels, such as linear, Gaussian, and polynomial. In particular, the linear kernel is defined as $\K(\mathbf{a},\mathbf{b})=\mathbf{a}^\intercal \mathbf{b}$, and it reduces to mapping to the original space (i.e., $\varphi(s)=s$ with the linear kernel).

After applying the kernel trick, the loss function to train the generator can be rewritten as $\Loss_G^{\K}$ in Equation~\eqref{eq:kernelloss}. With this kernel trick, we can easily apply various kernels without any explicit space mapping. We skip the description about each kernel function since they are already well-known.

\section{Correlation Regularization}
%
%
%
%
One key factor of successful training in MMGAN is the diversity of generated samples --- more precisely, the diversity in vector representations of generated samples. In an epoch, if all generated samples are similar to each other, then its manifold will be small and all the samples' vector representations will gather around its centroid, also known as mode collapsing, whereas randomly selected training image samples are usually much more diverse than the generated ones. Even when mode collapsing occurs, two centroids can be similar (i.e., $\| \varphi(\mathbf{c_\Y}) - \varphi(\mathbf{c_\X}) \|^2_2 \approx 0$), which is not preferred.
%
%
%

To improve the training procedure, we use the following regularization that promotes the diversity in generated samples at the stage of training the generator $G$. Given a discriminator $D$, the generator is trained so that it can generate more diverse samples.

That is, given a set of vector representations of samples (e.g., $\Set = \{\mathbf{s_1}, \mathbf{s_2}, \cdots, \mathbf{s_n}\}$), we first calculate their $n \times n$ correlation matrix (i.e., normalized covariance matrix) $A_\Set$.  An element $a_{i,j}$ of $A_\Set$ is a correlation value of vector representations between the $i$-th and the $j$-th samples of $\Set$. Then, our proposed regularization term is defined as 
\begin{equation}\label{eq:cov}
\R_G = \| A - A_\Set \|_{\F},
\end{equation}where $A$ is an identity matrix, which is the desired correlation matrix and means vector representations are independent from each other; and $\|\cdot\|_{\F}$ is the Frobenius matrix norm. The identity correlation matrix $A$ represents the case that each sample's vector representation is completely independent and the samples in $\Set$ are diverse.

\begin{theorem}
If $A_\Set = A$ in Equation~\eqref{eq:cov}, then samples in $\Set$ are all different.
\end{theorem}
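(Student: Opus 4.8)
The plan is to argue by contradiction, or equivalently to establish the contrapositive: if two of the samples coincide, then the correlation matrix $A_\Set$ cannot equal the identity matrix $A$. So I would begin by supposing $\mathbf{s_i} = \mathbf{s_j}$ for some pair of indices $i \neq j$, and then inspect the $(i,j)$ entry $a_{i,j}$ of $A_\Set$, which by definition is the Pearson correlation coefficient between the (component-wise centered) vectors $\mathbf{s_i}$ and $\mathbf{s_j}$.

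The clean case is when $\mathbf{s_i}$ has nonzero empirical variance: then $\mathbf{s_j} = \mathbf{s_i}$ gives $a_{i,j} = \mathrm{corr}(\mathbf{s_i},\mathbf{s_i}) = 1 \neq 0$, whereas the $(i,j)$ entry of $A$ is $0$, so $A_\Set \neq A$. To phrase this as a direct proof rather than a contrapositive, note that the hypothesis $A_\Set = A$ already forces every diagonal entry $a_{i,i} = 1$, which certifies that each $\mathbf{s_i}$ has nonzero variance; hence all the correlations $\mathrm{corr}(\mathbf{s_i},\mathbf{s_j})$ are well defined, and the argument above rules out any coincidence $\mathbf{s_i} = \mathbf{s_j}$ with $i\neq j$, leaving the samples pairwise distinct.

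The one subtlety — and the step I would treat most carefully — is the degenerate case where $\mathbf{s_i}$ is a constant vector, so its variance is $0$ and the Pearson correlation becomes a $0/0$ expression: here $a_{i,i}$ is either undefined or, under any reasonable convention, not equal to $1$, so again $A_\Set \neq A$ and the statement holds vacuously for such $\Set$. Everything else is immediate from the definition of the correlation coefficient. It is worth remarking, though it is not part of the claim, that the converse fails: distinct vectors can still be uncorrelated (or even perfectly correlated after an affine rescaling), so $A_\Set = A$ is sufficient but not necessary for diversity, which is precisely why it is deployed as a soft regularizer rather than a hard constraint.
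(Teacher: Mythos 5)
Your proof is correct, and it is actually tighter than the one in the paper, but it follows a genuinely different route. You work directly with the definition of the Pearson correlation on the vectors in $\Set$: by contraposition, $\mathbf{s_i}=\mathbf{s_j}$ with $i\neq j$ forces $a_{i,j}=1\neq 0$, and you use the diagonal condition $a_{i,i}=1$ to certify nonzero variance so the correlations are well defined, also disposing of the degenerate constant-vector case. The paper instead argues informally at the level of the generator: it \emph{assumes} that distinct images have distinct vector representations and that similar images have similar (hence highly correlated) representations, and then reasons that mode collapse ($G(z_1)\approx G(z_2)$) would produce off-diagonal entries near $1$, so $a_{i,j}=0$ implies $G(z_i)\neq G(z_j)$ and no mode collapse. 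The trade-off: your argument is rigorous for the literal statement (the vector representations in $\Set$ are pairwise distinct) and needs no assumptions about the discriminator, whereas the paper's argument, at the cost of those heuristic assumptions and of approximate reasoning ("close to 1"), aims at the stronger, practically relevant conclusion that the generated \emph{images} themselves are diverse. Your closing remark that $A_\Set=A$ is sufficient but not necessary (distinct vectors can be uncorrelated or perfectly correlated) is a useful observation the paper does not make, and it correctly explains why $\R_G$ is a soft regularizer rather than a hard diversity criterion.
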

\begin{proof}
First, assume that i) no different images have the exactly same vector representation and ii) similar images have similar vector representations. These two assumptions are most likely to be the case if the discriminator is enough trained. In fact, the second assumption is a property of DCGAN.

If mode collapse happens, similar samples will be generated regardless of the input vector $z$, i.e. $G(z_1) \approx G(z_2)$, where $z_1 \neq z_2$. Thus, their vector representations will show high correlation (by the second assumption). In other words, $a_{i,j}$ of $A_S$, where $i \neq j$, will be close to 1. In the optimal form of the correlation matrix $A$, $a_{i,j} = 0$, which means $G(z_i) \neq G(z_j)$ (i.e., no mode collapse).

Therefore, if $\R_G = 0$, then there is no mode-collapse.
\end{proof}

Finally, in MMGAN, neural networks are trained with the following loss functions: 
\begin{align}\begin{split}\label{eq:mmganloss}
\Loss_G^{final} &= 
\begin{cases}
\Loss_G + \beta\cdot \R_G, & \textrm{without any kernel} \\
\Loss_G^{\K}  + \beta\cdot \R_G, & \textrm{with a kernel $\K$}
\end{cases}\\
\Loss_D^{final} &= \Loss_{orig} - \Loss_G^{final}
\end{split}\end{align} where $\alpha$ and $\beta$ are weight parameters and $\Loss_G$ and $\Loss_G^{\K}$ are defined in Equations~\eqref{eq:loss} and~\eqref{eq:kernelloss}, respectively.

\section{Training Algorithm}
Algorithm~\ref{alg:mmgan} describes the training procedures of MMGAN based on the proposed loss functions. Basically, the overall training procedure is the same as the original GAN, but the main difference is that we maintain two manifolds representing $p_{data}$ and $p_g$ by using the exponentially-weighted moving average calculation of $\mathbf{c_\Y}$, $\mathbf{\Bell_\Y}$, $\mathbf{c_\X}$, and $\mathbf{\Bell_\X}$. With this moving average calculation, we aim to find statistically meaningful manifolds because we consider all the previous samples. \textbf{In particular, we set $\delta$ as $0.9$ or greater so that we can give a large weight on the cumulative manifold information rather than the manifold representing only the mini-batch samples}.

\begin{algorithm2e}[!t]
\footnotesize
\DontPrintSemicolon
\hrule
\KwIn{real samples: $\{x_1, x_2, \cdots\} \sim p(x)$}
\KwOut{a generative model $G$}
\hrule
$G \gets$ a generative neural network\;
$D \gets$ a discriminator neural network\;
\While{until convergence} {
Create a mini-batch of real samples $Y_{mini}= \{y_1,\cdots,y_n\}$\;
Create a set of generated samples $Z_{mini} = \{z_1,\cdots,z_n\}$ and their generated images $X_{mini} = \{G(z_0),\cdots,G(z_n)\}$\;
Train the discriminator $D$ with $\Loss_D^{final}$\;\label{alg:dis2}
\tcc{Moving average update of the centroid and the radius for $p_{data}$}
%
%
$\mathbf{c_\Y} = \delta \times \mathbf{c_\Y} + (1-\delta) \times \mathbf{c_{\Y_{mini}}}$\;\label{alg:mva1}
$\mathbf{\Bell_\Y} = \delta \times \mathbf{\Bell_\Y} + (1-\delta) \times \mathbf{\Bell_{\Y_{mini}}}$\;
\tcc{Moving average update of the centroid and radius for $p_{g}$}
$\mathbf{c_\X} = \delta \times \mathbf{c_\X} + (1-\delta) \times \mathbf{c_{\X_{mini}}}$\;
$\mathbf{\Bell_{\X}} = \delta \times \mathbf{\Bell_{\X}} + (1-\delta) \times \mathbf{\Bell_{\X_{mini}}}$\;\label{alg:mva2}
Train the generator $G$ with $\Loss_G^{final}$\;

}
\Return $G$\;
\hrule
\caption{\strut Training algorithm of MMGANs. $\Y_{mini}$ (resp. $\X_{mini}$) is the set of vector representations of a set of real samples $Y_{mini}$ (resp. generated samples $X_{mini}$)\label{alg:mmgan}}
\end{algorithm2e}

The runtime of Algorithm~\ref{alg:mmgan} remains similar to it of IGAN except the correlation regularization that additionally incurs a couple of $n \times d$ matrix multiplications. However, this is not significant because existing deep learning platforms can quickly calculate.

\section{Experiments}
In this section, we describe our evaluation environments and results. The source code and datasets of all the experiments used in this paper will be available at (\url{github.com/npark/MMGAN}) upon publication. Now we are cleaning them. We forked the DCGAN\footnote{\url{github.com/carpedm20/DCGAN-tensorflow}} and IGAN\footnote{\url{github.com/openai/improved-gan}} source codes and implemented our MMGAN on our own.

Note that there exists no solid metric to quantitatively evaluate the quality of generated images. While the inception score~\cite{DBLP:journals/corr/SalimansGZCRC16} can be used for some images, we generally rely on humans' visual perception to evaluate the quality of generated samples.

\subsection{Experimental Setup}\label{sec:env}
We use three image datasets, MNIST~\cite{lecun-mnisthandwrittendigit-2010}, CelebA~\cite{liu2015faceattributes}, and CIFAR-10~\cite{cifar}. The MNIST database, one of the most popular evaluation datasets, contains 60,000 hand-written digit images. CelebA has over 360,000 celebrity face images with a large variety of facial poses. CIFAR-10 consists of 52,000 images from 10 different object classes. While the images in CelebA and CIFAR-10 have RGB channels, MNIST images are grey-scale ones. 

We rely on human visual perception to evaluate MNIST and CelebA since there is no quantitative evaluation metric. To this end, we created a website for a user study,\footnote{The website is available at \url{gantest.herokuapp.com}. Using the passkey 0DEE2A, one can click downvote icons for an image if  s/he thinks it as fake and submits the results. Results with the passkey will not be counted in the results.} where given randomly chosen images, participants are asked to distinguish between real and fake images. More than 30 people from three different continents participated in the study. Each participant performed the same number of trials for each generative method to avoid any potential biases. We demonstrate how many fake images are correctly identified as fake for each method.

For CIFAR-10, we primarily use the \emph{inception score} defined in~\cite{DBLP:journals/corr/SalimansGZCRC16}. It uses the inception model~\cite{DBLP:journals/corr/SzegedyVISW15} to detect objects in generated images. The inception score is designed to give high scores if \emph{various} and \emph{high-quality} objects are recognized in the generated samples.

We used a cluster of machines running Linux with Xeon CPU, 32GB RAM, and K80 GPUs for all our experiments, and the Tensorflow \cite{abadi2016tensorflow} and Theano \cite{bergstra2011theano} deep neural network libraries.

\subsection{Experimental Results}
In Figure~\ref{fig:cifar}, we show generation examples for some random mini-batches at the last epoch. We did not cherry-pick them. It is very noticeable that almost all images are visually recongnizable.

\begin{table}[t]
\scriptsize
\centering
\caption{The percentage of samples selected as fake in the user study. Lower values are preferred.}
\label{user}
\begin{tabular}{|c|c|c|c|c|c|}
\hline
Dataset & Real & DCGAN & \begin{tabular}[c]{@{}c@{}}MMGAN\\ on DCGAN\\(RBF Kernel)\end{tabular} & \begin{tabular}[c]{@{}c@{}}MMGAN\\ on DCGAN\\(EXP Kernel)\end{tabular} & \begin{tabular}[c]{@{}c@{}}MMGAN\\ on DCGAN\\(No $\R_G$)\end{tabular}\\ \hline
MNIST & 14\% & 38.2\%  & \textbf{32.4\%} & 35.4\% & 45.5\% \\ \hline
CelebA & 4\%  & 75.3\% & 77\%  & \textbf{73.3\%} & 76.6\% \\ \hline
\end{tabular}
\end{table}

\begin{figure*}[!htb]
\centering
\subfigure[DCGAN]{\includegraphics[width=0.49\textwidth]{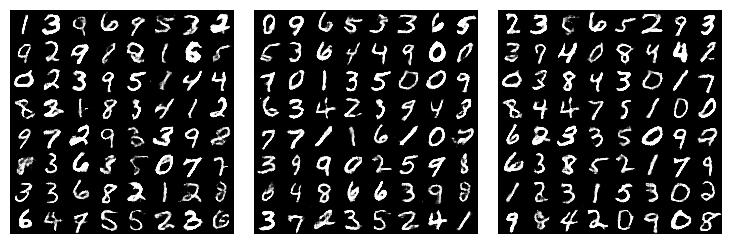}}
\subfigure[MMGAN (RBF, $\alpha=1.0$, $\beta=1.0$)]{\includegraphics[width=0.49\textwidth]
{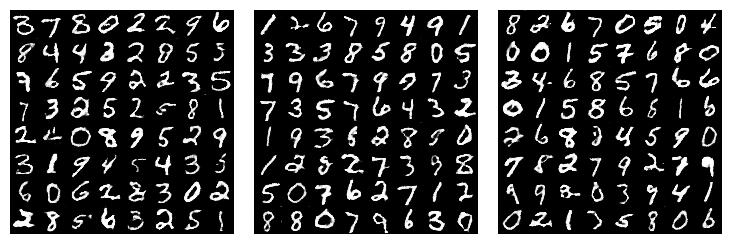}}

\subfigure[DCGAN]{\includegraphics[width=0.49\textwidth]
{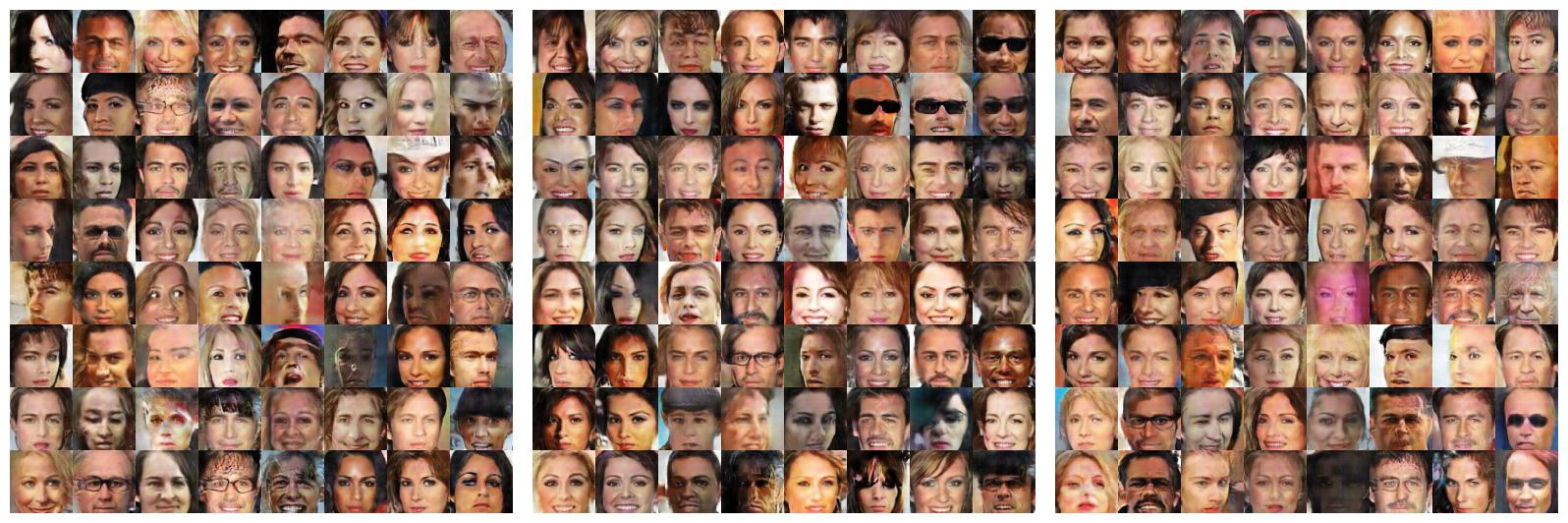}}
\subfigure[MMGAN (EXP, $\alpha=1.0$, $\beta=1.0$)]{\includegraphics[width=0.49\textwidth]
{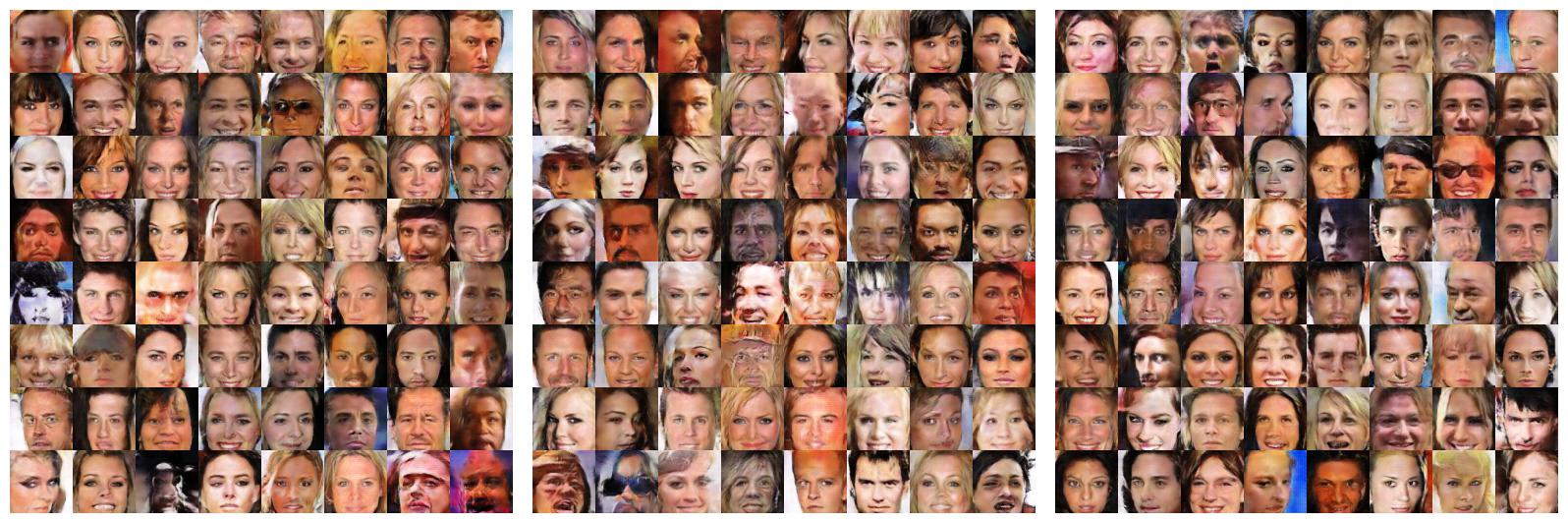}}

\subfigure[MMGAN (RBF, $\alpha=1.0$, $\beta=0.1$)]{\includegraphics[width=0.25\textwidth]
{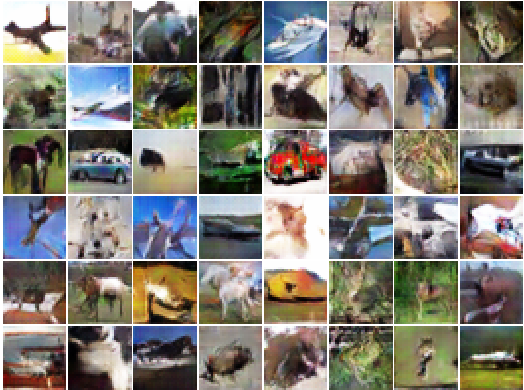}}
\subfigure[MMGAN (EXP, $\alpha=1.0$, $\beta=0.1$)]{\includegraphics[width=0.25\textwidth]
{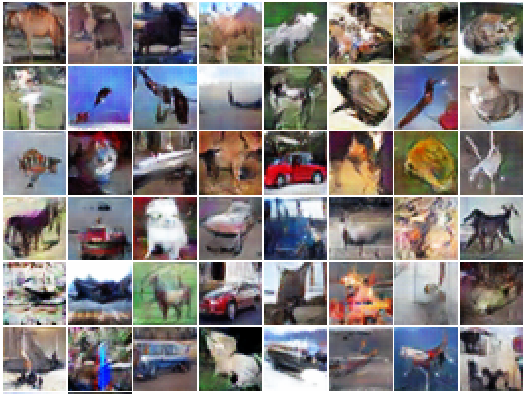}}
\qquad
\subfigure[Inception score for CIFAR-10 (class-unsupervised training)]{
\begin{footnotesize}\begin{tabular}[b]{|c|c|c|c|}
\hline
GAN model & DCGAN & \begin{tabular}[c]{@{}c@{}}MMGAN\\ on DCGAN\\(RBF Kernel)\end{tabular} & AGE \\ \hline
Inception score &  6.8  & $6.96 \pm 0.06$  &  5.6 \\ \hline
GAN model & IGAN & \begin{tabular}[c]{@{}c@{}}MMGAN\\ on IGAN\\(EXP Kernel)\end{tabular} & $\alpha$-GAN \\ \hline
Inception score & 6.86 $\pm$ 0.06 & 7.8 $\pm$ 0.06 & 7.0 \\ \hline
\end{tabular}\end{footnotesize}}

\caption{Generated samples for (a)-(b) MNIST, (c)-(d) CelebA, (e)-(f) CIFAR-10, and (g) inception score for CIFAR-10. We did not cherry-pick generated images but show random mini-batches. Note that almost all generated images show high quality.}\label{fig:cifar}
\end{figure*}

\subsubsection{MNIST}
Figure~\ref{fig:cifar} (a) and (b) shows those image samples generated by DCGAN and MMGAN (implemented after modifying DCGAN). Figure~\ref{fig:cifar}(a) are generated examples available at the official github repository of DCGAN. Many samples are properly recognizable by humans, while a few samples are of poor quality. Figures~\ref{fig:cifar}(b) shows images generated by MMGAN with the RBK kernel.

Our user study results in Table~\ref{user} show that MMGAN received a less number of downvotes than DCGAN. This indicates that the participants considered  MMGAN-generated images more realistic than DCGAN-generated images. Note also that without the correlation regularization, MMGAN does not outperform DCGAN, which implies that the proposed manifold-matching becomes more robust when generated samples are diverse.

Interestingly, 14\% of real images of MNIST also received downvotes. This means our high-quality generated images make participants confused and they committed non-trivial mistakes in the user study.



\subsubsection{CelebA}
Figure~\ref{fig:cifar} (c) and (d) show the comparison of CelebA samples generated by DCGAN and MMGAN (implemented after modifying DCGAN). The user study results show that many images generated by DCGAN and MMGAN were commonly identified as fake by participants. However, MMGAN slightly outperforms DCGAN. It is more challenging to generate high-quality images for CelebA than MNIST and CIFAR-10 because CelebA does not have labels and all facial images look similar whereas MNIST and CIFAR-10 contain diverse hard-written digits and objects. 


\subsubsection{CIFAR-10}
For CIFAR-10, we compare $\alpha$-GAN~\cite{2017arXiv170604987R}, DCGAN, IGAN,and MMGAN (implemented after modifying both DCGAN and IGAN). IGAN is the state-of-the-art method for this dataset~\cite{DBLP:journals/corr/Goodfellow17}. We use the inception score (rather than visual evaluations). $\alpha$-GAN is based on auto-encoders. $\alpha$-GAN was very recently released and outperforms other auto-encoder-based models such as~\cite{DBLP:journals/corr/UlyanovVL17a} by a considerable margin.

Note that MMGAN performs better than DCGAN and IGAN, as shown in the table of Figure~\ref{fig:cifar} (g). This highlights the superiority of MMGAN in comparison with conventional GAN training procedures. IGAN reported the inception score of 6.86 and our MMGAN on top of it marked a much better score. Samples generated by MMGAN is in Figure~\ref{fig:cifar} (e) and (f).

\section{Conclusion}
In this paper, we proposed a novel GAN model based on manifold-matching (MMGAN). The proposed MMGAN differs from other existing GAN models, especially with respect to the training loss function of the generator. Instead of the predictions $D(x)$ and $D(G(z))$ made by the discriminator, MMGAN extracts vector representations of real and fake images from the last layer of the discriminator and calculate an approximated manifold for each of the set of real images and that of fake ones. In order to better train the generator, MMGAN requires that the two manifolds (corresponding to the real and generated data) match. MMGANs hence benefit from a more accurate discriminator, which enables them to make more accurate vector representations of images. Our experiments demonstrates that MMGAN shows comparable or  better performance in comparison with other popular GAN models for three image datasets.


\bibliographystyle{abbrv}
\bibliography{aaai}
\end{document}